\documentclass{article}

\usepackage{graphicx}
\usepackage{amsfonts}
\usepackage{amsthm}
\newtheorem{theorem}{Theorem}

\newtheorem{remark}{Remark}

\usepackage{amsmath}
\usepackage{amssymb}
\usepackage{mathrsfs} 
\usepackage[colorlinks=true, linkcolor=blue, urlcolor=blue, citecolor=blue]{hyperref}

\usepackage{natbib}
\bibliographystyle{apalike}
\usepackage{epstopdf}

\baselineskip=15.5pt

\usepackage[margin=1in]{geometry}

\usepackage[utf8]{inputenc} 
\usepackage{hyperref}       
\usepackage{url}            
\usepackage{booktabs}       
\usepackage{amsfonts}       
\usepackage{nicefrac}       
\usepackage{microtype}      
\usepackage{xcolor}         

\usepackage[linesnumbered,ruled,vlined]{algorithm2e}
\DeclareMathOperator*{\argmin}{arg\,min}
\DeclareMathOperator*{\argmax}{arg\,max}

\title{Online Conformal Model Selection for Nonstationary Time Series}


\author{Shibo Li and Yao Zheng
	\\ \textit{University of Connecticut}}
\date{}

\begin{document}
	
\setlength{\parindent}{16pt}
\maketitle


\begin{abstract}
	This paper introduces the \textit{MPS (Model Prediction Set)}, a novel framework for online model selection for nonstationary time series. Classical model selection methods, such as information criteria and cross-validation, rely heavily on the stationarity assumption and often fail in dynamic environments which undergo gradual or abrupt changes over time. Yet real-world data are rarely stationary, and model selection under nonstationarity remains a largely open problem. To tackle this challenge, we combine conformal inference with model confidence sets to develop a procedure that adaptively selects models best suited to the evolving dynamics at any given time. Concretely, the MPS updates in real time a confidence set of candidate models that covers the best model for the next time period with a specified long-run probability, while adapting to nonstationarity of unknown forms. Through simulations and real-world data analysis, we demonstrate that MPS reliably and efficiently identifies optimal models under nonstationarity, an essential capability lacking in offline methods. Moreover, MPS frequently produces high-quality sets with small cardinality, whose evolution offers deeper insights into changing dynamics.  As a generic framework, MPS accommodates any data-generating process, data structure, model class, training method, and evaluation metric, making it broadly applicable across diverse problem settings.
\end{abstract}

\section{Introduction}

The popularity of online time series modeling has surged due to the growing need for real-time, adaptive forecasting \citep{Liu2016, Zhao2022, Bhatnagar2023, Wang2024}. With the continuous influx of data, forecasters and decision-makers must now process and update models instantaneously as data streams in. Yet, real-world time series  are rarely stationary due to unforeseen events, structural changes, or evolving dependence structures \citep{Dahlhaus2012, Aue2013, Ditzler2015, Baker2020}. The shift toward real-time modeling brings unique challenges particularly in model selection \citep{Kley2019,Wang2022}.


Classical model selection methods for time series, such as information criteria,  cross-validation, and likelihood-based approaches, rely heavily on the stationarity assumption \citep{McQuarrie1998, box2015, Hyndman2021}. However, offline methods fall short in online settings where the true model may evolve over time. Moreover, in a changing environment, model selection is inherently associated with a level of uncertainty. This may arise from similarly competitive models, which is increasingly common with the advancement of modern forecasting techniques; thus, the optimal model may not be a single model, but rather a groups of models.
Uncertainty may also stem from limitations in available data. For example, during an ongoing global pandemic or a sudden policy change, limited data can make it difficult to determine whether the resulting economic disruption should be treated as a permanent structural shift---warranting a change in the model---or merely as a temporary outlier \citep{Stock2025}. Prompted by the complexities of real-time data and modern modeling techniques, this paper addresses a key yet understudied question: in online settings, how can we perform model selection that adapts to unknown forms of nonstationarity in time series, while also accounting for the uncertainty inherent in the selection process?

\paragraph{Contributions} To tackle the challenge of adaptive model selection in online nonstationary settings, we introduce the  \textit{Model Prediction Set (MPS)}. This procedure updates a confidence set that, among a collection of candidate models $\mathscr{M}$, covers the best model for the next time period with a specified level of confidence over the long run, while adapting to changing dynamics in real time. 

\begin{figure}[t]
	\centering
	\includegraphics[width=0.6\textwidth]{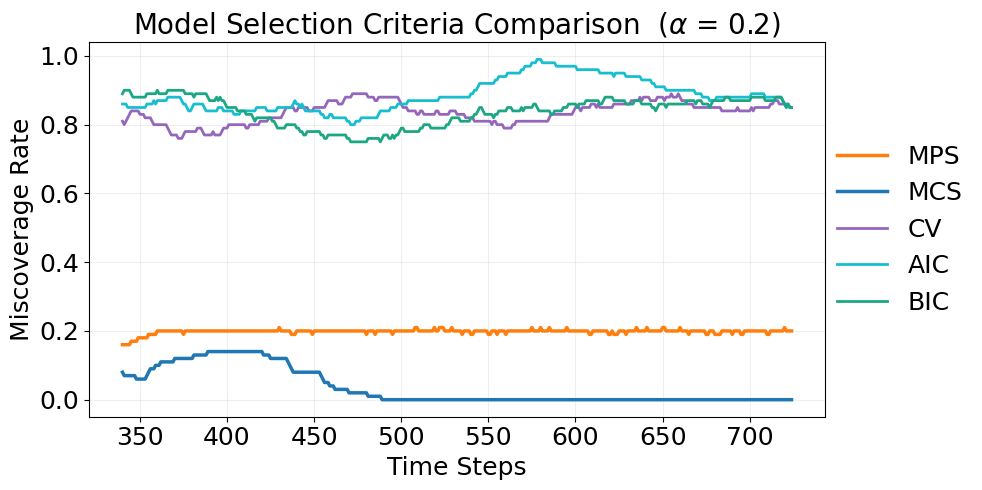}
	\caption{Miscoverage rates (i.e., the proportion of times the best model $\mathcal{M}_{t+1}$ is not included in $\mathcal{C}_t$ up to time $t$ evaluated over a moving window of size 100) of several model selection methods applied to forecasting with the ETTh1 dataset (see Section \ref{sec:data}). We compare: (i) offline single-model selection approaches (AIC,  BIC, and CV based on minimizing forecast error over a hold-out set); (ii) offline MCS; and (iii) the proposed MPS. The best model is defined using one-step-ahead forecast error (forecasting time $t+1$ based on data up to time $t$) as the evaluation metric. As shown, only  MPS achieves accurate control of miscoverage close to the nominal level $0.2$. All single-model selection methods perform poorly, and MCS exhibits extremely low miscoverage by producing trivial sets that ultimately include all candidate models.  See Appendix \ref{sec:a1} for experiment details and Section \ref{sec:data} for a more detailed comparison of MPS and MCS. }
	\label{fig:ModelSelction}
\end{figure}

Specifically, let $\mathcal{M}_t\in\mathscr{M}$ be the \textit{best} model at each time $t$, where the optimality is defined based on a user-chosen model evaluation metric. Clearly, predicting the best model $\mathcal{M}_{t+1}$ for the subsequent period, given information up to the current time $t$, is extremely challenging in a nonstationary setting. See Figure \ref{fig:ModelSelction} for an illustration of the poor performance of offline single-model selection approaches, including Akaike information criterion (AIC),  Bayesian information criterion (BIC), and cross-validation (CV), in selecting the best forecasting model for the next time point.  Instead, a more realistic and practical objective is to construct a confidence set $\mathcal{C}_{t}$ of competing models, based on history up to the current time $t$, that is guaranteed to include $\mathcal{M}_{t+1}$ with a certain level of confidence. Note that since   $\mathcal{M}_{t+1}$ is not revealed until time $t+1$, the confidence set $\mathcal{C}_{t}$ is indeed a \textit{prediction set}, similar to the concept of prediction intervals. Formally, 
given $\mathscr{M}$, as any time series data are continuously collected, our method produces a sequence of sets $\{\mathcal{C}_{t}\}$ such that
\begin{equation}\label{eq:target}
	\lim_{T\rightarrow\infty} \frac{1}{T}\sum_{t=1}^{T}\mathbf{1}\{\mathcal{M}_{t+1} \notin \mathcal{C}_{t}(1-\alpha_t)\} \leq \bar{\alpha},
\end{equation}
where $ \bar{\alpha}\in(0,1)$ is a pre-specified target miscoverage rate,  and $\mathcal{C}_{t}=\mathcal{C}_{t}(1-\alpha_t)\subset\mathscr{M}$ is the MPS constructed at time $t$, with $\alpha_t\in(0,1)$ being the nominal miscoverage rate which is adaptively calibrated based on information available up to time $t$.

The MPS is highly generic. It does not rely on the identification of a true model, nor does it require the time series to be stationary. The definition of  the \textit{best} model is also flexible: the process of the best models $\{\mathcal{M}_{t}\}$ can be defined based on any user-chosen empirical model evaluation metric, such as  information criteria and likelihood- or residual-based diagnostic measures, or out-of-sample measures like forecast accuracy at any forecast horizon. Moreover, the model class $\mathscr{M}$ can be any statistical models or black-box machine learning algorithms. In fact, as discussed in Section \ref{sec:prelim}, the term ``models'' can broadly refer to any models, learning or forecasting algorithms, and even alternatives such as policies that do not necessarily involve modeling the data. Additionally, $t$ may represent an individual time point at which  data are collected, but more generally, it may refer to the end of the $t$th time period, with each time period  encompassing multiple time points. Hence, we use the terms time period and time point interchangeably throughout this paper.

Our MPS framework draws inspiration from the Model Confidence Set (MCS) introduced by econometricians \citet{hansen2011model}, a seminal method developed for offline model selection, as well as from the recently introduced Bellman conformal inference (BCI) method in \citet{yang2024bellman}; see  an illustration in Figure \ref{fig:MPSProcedure}. However, our focus is fundamentally different from both works, as we discuss in more detail in Sections \ref{sec:prelim} and \ref{sec:method}. MPS is the first to address adaptive model (set) selection with accurate coverage guarantees in an online nonstationary environment, and the first to do so with minimal distributional assumptions and in highly flexible problem settings. Through numerical evaluation with simulated and empirical data in Section \ref{sec:numerical}, we demonstrate that MPS reliably and efficiently identifies optimal models regardless of data-generating mechanisms and forms of nonstationarity, an essential capability lacking in offline methods. Moreover, MPS frequently produces high-quality sets with small cardinality. These sets effectively identify the models that best explain the data at any given time, offering model-informed insights into the evolving dynamics.



\begin{figure}[t]
	\centering
	\includegraphics[width=0.8\textwidth]{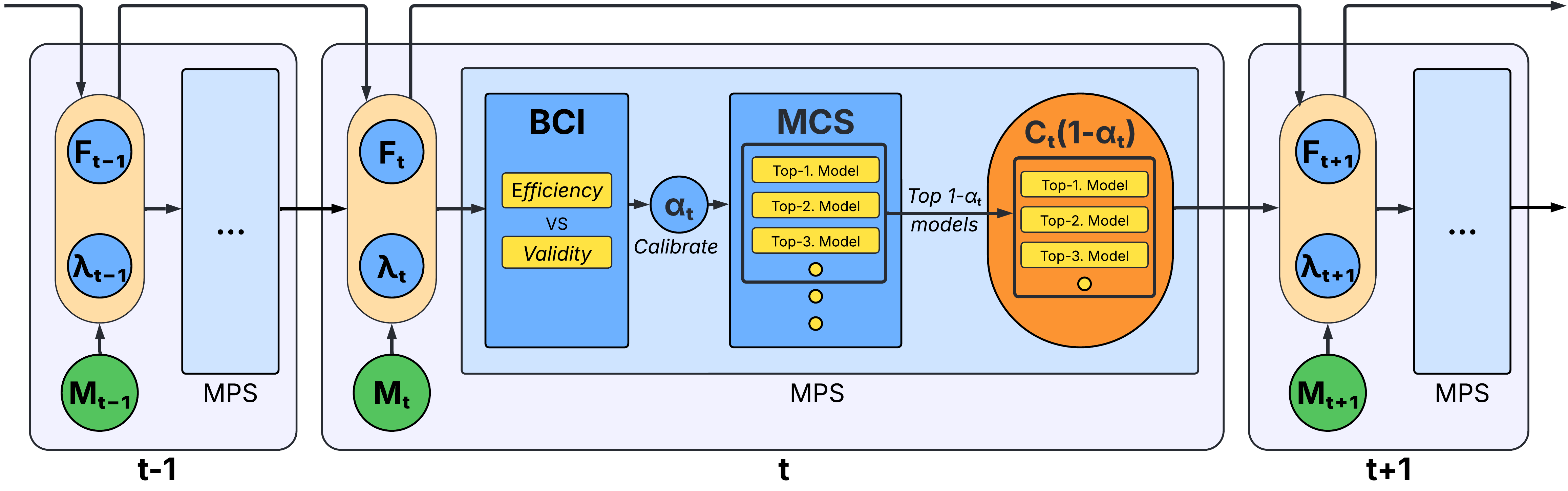}
	\caption{Illustration of the MPS procedure. More details are provided in Section \ref{sec:method}.}
	\label{fig:MPSProcedure}
\end{figure}



\paragraph{Related work} Online time series model selection methods that adapt to nonstationarity have been scarcely explored in the literature. Our work is the first to construct model \textit{prediction} sets, i.e., to infer the best models for the \textit{next} time period, in general online nonstationary settings with long-run coverage guarantees. Methodologically, MPS is closely related to MCS \citep{hansen2011model} and BCI  \citep{yang2024bellman}. However, few existing works share the same goal as ours. 

MCS have been used in traditional econometric forecasting to compare 
out-of-sample predictive performance, including applications to electricity price and economic 
growth forecasting \citep{Weron2014,Fritz2025}. 
More recently, MCS has been applied to the evaluation of machine learning--based forecasting 
models in energy, macroeconomic, and credit risk applications \citep{Huang2021,medeiros2021,Dumitrescu2022}. Moreover, there has been a rapidly growing interest in  uncertainty quantification of model selection which shares a similar objective with MCS. For example, \cite{Ferrari2015} and \cite{Zheng2019}  introduced confidence sets for model selection  via F-tests and likelihood-ratio tests, respectively, and \cite{Li2024} focused on  high-dimensional shrinkage model selection; see also the references therein as well as recent work such as \cite{Wang2025, Lewis2025, Kim2025, Cecil2025}.
Recent extensions directly built upon  \cite{hansen2011model} include \cite{arnold2024sequential}, who proposed sequential MCS by incorporating sequential testing methods, and \cite{bauer2025conditional}, who extended the MCS framework by allowing model comparison to be conducted conditional on observable 
states or regimes. However, all aforementioned developments are inherently offline; that is, the guarantees are defined for the current rather than the next time period.

At a high level, MPS  is also related to the extensive literature on conformal prediction, which was first introduced by \cite{Vovk2005} as a model-agnostic, distribution-free framework for finite-sample uncertainty quantification, with marginal coverage guarantees under exchangeability; see also \cite{Lei2018}. Recent extensions have expanded its scope to handle dependent and complex data structures, such as the work of  \cite{Tibshirani2019} on covariate shifts and \cite{Zhou2024} on random objects. On the other hand, adaptive conformal inference introduced by \cite{Gibbs2021}, as well as its variants such as  \cite{Zaffran2022, Gibbs2024} and \cite{yang2024bellman}, aims to calibrate  time-varying nominal miscoverage rate to address distribution shifts in online settings. While these works focus exclusively on predicting outcome variables, the expansion from static to dynamic settings reflects the growing need for robust uncertainty quantification in real-time systems.




\section{Preliminaries}\label{sec:prelim}

We  employ the following notation throughout the paper. For any positive integer $m$, let $[m]:=\{1,\dots,m\}$. For any set $\mathcal{S}$, we denote its cardinality by $|\mathcal{S}|$. The indicator function $\mathbf{1}(\cdot)$ takes the value one if the condition is true, and zero otherwise. The Euclidean norm of a vector is denoted by $\|\cdot\|_2$.

\paragraph{Online vs. offline model selection}  
Let $\mathscr{M}$ denote a collection of candidate models.  Offline model selection procedures \citep{McQuarrie1998, Qi2001, Castle2011}   aim to select the best model based on observed data for a fixed time period. However,  in an online setting where the data-generating process may undergo changes  as new data continue to arrive, the model  deemed  \textit{best} among $\mathscr{M}$ at any time $t$ may vary.  Moreover, in real-time applications, the goal of model selection is \textit{forward-looking} rather than retrospective. Thus, it is crucial that the model selected at any time  continues to perform reasonably well in the subsequent period. Note that in this paper, we define the \textit{best} model by any \textit{empirical} model evaluation metric, which is user-chosen and pre-determined. Under this definition, there are no tied models as this is numerically unlikely.

Obviously, the best model for the subsequent period, $\mathcal{M}_{t+1} \in\mathscr{M}$ is inherently a random object unknown at the current time $t$, and hence it is not estimable but must  be \textit{predicted}. 
The insufficiency of data under evolving dynamics, combined with the presence of comparably performing models, introduces substantial uncertainty into the prediction task. This motivates our construction of MPS in place of single-model selection.

\paragraph{Model confidence set (MCS)} The MCS procedure introduced by \citet{hansen2011model} has been highly influential in the forecasting literature. 
Departing from conventional single-model selection, it is motivated by the key fact that, in many applications, data are insufficient to identify a single model that significantly dominates all competitors.  It provides a robust mechanism for addressing the uncertainty in model selection and offers a more comprehensive view by including models that are   indistinguishable in their ability to explain the data. Specifically, given (i) an observed time series of length $n$, (ii) a collection of candidate models $\mathscr{M}$, and (iii) a model evaluation metric, the MCS procedure produces a set  $\mathcal{C}(1-\beta)\subset \mathscr{M}$ containing one or multiple best-performing models, where the cardinality of $\mathcal{C}(1-\beta)$ decreases as  $\beta\in[0,1]$ increases. Its goal is to cover the best model(s) with a given level of confidence  in an \textit{offline} setting, where the coverage targets observed time periods rather than future ones.  As a direct consequence of this different objective, there is also a conceptual nuance in the notion of the best model, which differs between \citet{hansen2011model} and our approach: They  define a \textit{population} concept of the \textit{best} model(s), $\mathcal{M}^*$, which is unobservable. Its nature is analogous to that of an unknown yet fixed parameter; so MCS, in this sense, amounts to a confidence interval for a parameter, and its offline asymptotic guarantee is established by \cite{hansen2011model}: $\liminf_{n\rightarrow\infty} \mathbb{P}(\mathcal{M}^*\not\subset \mathcal{C}(1-\beta))\leq \beta$ for any $\beta\in[0,1]$. In contrast, we define the \textit{best} model  $\mathcal{M}_{t+1}$ in the \textit{empirical} sense. It is determined by finite-sample performance and  is therefore random at time $t$ yet observable at $t+1$.

Operationally, for a given nominal miscoverage rate $\beta$, the MCS procedure takes as input an $n\times m$ \textit{loss matrix} $L=(L_{t,i})_{t\in[n], i\in[m]}$, computes a series of model equivalence tests and eliminations---typically implemented via bootstrap \citep{Bernardi2018}---and returns the set $\mathcal{C}(1-\beta)$ as output, where $n$ is the time length, and $m$ is the cardinality of $\mathscr{M}$. Each \textit{loss} $L_{t,i}$  quantifies the relative performance of model $i$ at time $t$,  which is determined based on a user-chosen model evaluation metric $\mathcal{L}$ and the available data at time $t$.
For example,  in a time series forecasting task for $Y_t\in\mathbb{R}^d$, if one uses the squared loss as $\mathcal{L}$,  then the performance of the point forecast $\widehat{Y}_{t,i}$ from model $i$ is measured by $L_{t,i}=\mathcal{L}(Y_t, \widehat{Y}_{t,i})=\|Y_t-\widehat{Y}_{t,i}\|_2^2$. Note that $\widehat{Y}_{t,i}$ may be one- or multi-step-ahead forecast for $Y_t$. In fact, $\mathcal{L}$ can be any  criterion function not necessarily tied to prediction, such as  information criterion, cross-validation, likelihood- or residual-based diagnostic measures, as long as it can be computed using a model $\mathcal{M}_i $ and a sequence $\{Y_t, Y_{t-1}, \dots\}$ as inputs, and the observation $Y_t$ may have any data structure.

Moreover, although forecasting is a leading application (which we adopt in this paper for conceptual simplicity), MCS is far more general and  is not limited to comparison of models. It can be used to select random objects, such as trading rules (see also the general discussion in \cite{hansen2011model}), since a corresponding loss matrix can be obtained as long as such random objects can be evaluated via a metric $\mathcal{L}$ and historical data, i.e., $L_{t,i}=\mathcal{L}(\mathcal{M}_i, \mathcal{F}_t)$, where $\mathcal{M}_i$ is the $i$th object (e.g., a policy), and $\mathcal{F}_t$ represents all available information up to time $t$.

%


\section{Model Prediction Set}\label{sec:method}

To address the limitations of model selection in online, nonstationary time series, we propose the Model Prediction Set (MPS).  MPS dynamically calibrates the miscoverage level $\alpha$ within the MCS framework, enabling adaptation to evolving environments while maintaining  the long-run miscoverage guarantee in \eqref{eq:target}. This guarantee is prioritized empirically in response to nonstationarity and model uncertainty, which is reflected by increased set cardinality during periods of high uncertainty as evidenced in Section \ref{sec:numerical}.

\subsection{MPS Procedure} \label{subsec:MPS}

Given any user-chosen empirical model evaluation metric $\mathcal{L}$, MPS updates a confidence set $\mathcal{C}_t$ of competitive models, based on information available up to the current time $t$, that is guaranteed to include the \textit{best} model $\mathcal{M}_{t+1}$ at time $t+1$ with a  target confidence level of  $1-\bar{\alpha}$ in the long run. It consists of two basic building blocks, which originate from MCS and BCI, respectively: 
\begin{itemize}
	\item a method for constructing model sets $\mathcal{C}_t(1-\beta)$ for any nominal miscoverage rate $\beta\in[0,1]$ based on time series data available up to time $t$; and
	\item a method  for calibrating the instantaneous nominal rates $\alpha_t$,  which adapt sequentially to evolving coverage performance in response to changing dynamics.
\end{itemize}
We illustrate the idea of  MPS  in Figure \ref{fig:MPSProcedure} and describe the methodology in this section. 

Let $\{Y_t\}$ be the data stream and, for simplicity, consider the task of forecasting $Y_t\in\mathbb{R}^d$; but as discussed,  $Y_t$ may, in general, represent data of any structure observed over time, and the application is not limited to forecasting. At each time $t$, a new observation $Y_t$ becomes available, which yields the losses $L_{t,i}$ (e.g., the forecast error $\|Y_t-\widehat{Y}_{t,i}\|_2^2$) evaluated for all candidate models $i\in\mathscr{M}$. Thus, based on the history of losses, $L_{:t}:=(L_{s,i})_{s\in[t], i\in [m]}$, given any nominal miscoverage rate $\alpha_t\in[0,1]$, we can compute $\mathcal{C}_t(1-\alpha_t)$ as the MCS obtained from the loss matrix $L_{:t}$. 

As discussed in Section \ref{sec:prelim}, MCS is an offline method and, by itself, cannot provide the online coverage guarantee in \eqref{eq:target}. Rather,  \eqref{eq:target} will be ensured by a calibration method for $\{\alpha_t\}$, which follows from the novel BCI procedure introduced by \cite{yang2024bellman}. Nonetheless, the latter is solely designed to calibrate prediction intervals for the \textit{value} of a \textit{univariate} time series in the online nonstationary setting, whereas we consider prediction sets for the best \textit{model} $\mathcal{M}_{t+1}$. Because of our different focus, our framework allows for any time series model, whether univariate, multivariate, or high-dimensional, since the loss matrix is the only essential input. Moreover, MPS is not confined to variable prediction tasks, as the criteria for defining the best model are flexible.

However, our method inherits the inner working of BCI. As a form of model predictive control \citep{Borrelli2017}, the main idea of BCI is to simulate future outcomes of the system by drawing from the observed history up to time $t$. Based on this ``historical'' simulation, an action $\alpha_t$ is planned via minimizing a cost function which simultaneously encourages efficiency and accurate control of miscoverage. Let $\beta_t=\sup\{\beta\in[0,1]:\mathcal{M}_{t+1}\in\mathcal{C}_t(1-\beta)\}$ with marginal distribution $F_t$. 
The calibration of $\alpha_t$ is given as below:
\begin{align}	\label{eq:BCI}
	\alpha_t^*=\underset{\alpha}{\min}\, \mathbb{E}_{\beta_t\sim F_t}\left\{\lvert \mathcal{C}_t(1-\alpha)\rvert + \lambda_t\max[\mathbf{1}(\alpha>\beta_t)-\bar{\alpha}, 0]\right\}
\end{align}
and 
\[
\alpha_t=\alpha_t^*\mathbf{1}(\lambda_t<\lambda_{\max}),
\]
where $\lambda_{\max}>0$ is a pre-specified threshold. Since $\lvert \mathcal{C}_t(1-\alpha)\rvert$ is non-increasing in $\alpha$ while $\mathbf{1}(\alpha>\beta_t)$ is non-decreasing in $\alpha$, the trade-off between efficiency (i.e., the cardinality of the model set) and validity (i.e., control of the miscoverage rate) is balanced by $\lambda_t$, which is a relative weight to penalize miscoverage at time $t$. In addition,  $\lambda_t$ is adaptively updated by $\lambda_{t+1}=\lambda_t+\gamma[\mathbf{1}(\alpha>\beta_t)-\bar{\alpha}]$, where $\gamma>0$ is the step size. This update rule ensures  that a miscoverage at time $t-1$ leads to a larger $\lambda_t$, up to a maximum threshold $\lambda_{\max}$, and is the key lever to  achieve \eqref{eq:target}.  Since $\alpha\in[0,1]$, the optimization in \eqref{eq:BCI} can be easily implemented via  one-dimensional grid search. 

In practice, we use the empirical distribution of $\beta_{t-1}, \dots, \beta_{t-\tau}$ to approximate $F_t$, where $\tau$ is a fixed block size, and a reasonable range is $\tau\in [100, 500]$. Due to the approximation of $F_t$, given an initial training dataset $\{Y_1,\dots, Y_n\}$,  the MPS algorithm requires an offline initialization of $\beta_{n-1}, \dots, \beta_{n-\tau}$,  before starting online updates after time $t=n$. This initialization uses MCS without any calibration; see lines 2--9 in Algorithm \ref{alg:MPS}. We recommend setting $\lambda_{\max}=2000$ and $\gamma=c\lambda_{\max}$ with $c=0.2$, which are used throughout our numerical studies. The bootstrap sample size for implementing MCS is set to $B=100$, and the grid for searching the nominal miscoverage rate is set to $\mathcal{G}=\{k/20: k=0,1, \dots, 19\}$. The detailed implementation is given in Algorithm \ref{alg:MPS}.



\begin{algorithm}[t]
	\caption{Model Prediction Set Algorithm}
	\label{alg:MPS}
	\textbf{Input:}  A collection of candidate models $\mathscr{M}$ indexed by $\{1,\dots, m\}$, target miscoverage level $\bar{\alpha}$, threshold $\lambda_{\max}$, relative step size $c$, step size $\gamma=c\lambda_{\max}$, block size $\tau$,  initial training data $\{Y_1, \dots, Y_{n}\}$ with $n\geq \tau$, model evaluation metric $\mathcal{L}$,  bootstrap sample size $B$, grid $\mathcal{G}$\\
	\textbf{Offline initialization of $\{\beta_{t},  t= n-\tau+1, \dots, n-1\}$:}\\
	\hspace*{5mm}\textbf{for} $t\in\{n-\tau+2, \dots, n\}$ \textbf{do}\\
	\hspace*{10mm}Obtain the \textit{best} model $\mathcal{M}_t=\argmin_{i\in \mathscr{M}}L_{t,i}$\\ 
	\hspace*{10mm}\textbf{for} $\alpha\in \mathcal{G}$ \textbf{do}\\
	\hspace*{15mm} $\mathcal{C}_{t-1}(1-\alpha)=\text{MCS}(L_{:t-1}, \alpha, B)$\\
	\hspace{10mm}\textbf{end for}\\
	\hspace{10mm}  $\beta_{t-1}= \argmax\{\beta\in\mathcal{G}: \mathcal{M}_t\in \mathcal{C}_{t-1}(1-\beta)\}$\\
	\hspace*{5mm}\textbf{end for}\\
	\textbf{Initialize:} $\lambda_{n}= \lambda_{\max}/2$, $\alpha_n=\bar{\alpha}$, $\mathcal{C}_n(1-\alpha)=\text{MCS}(L_{:n}, \alpha, B)$ for all $\alpha\in\mathcal{G}$\\
	\textbf{repeat at each time step $t \geq n+1$} \\
	\hspace*{5mm}Observe new data $Y_t$ and compute losses $L_{t,i}$ for all $i\in\mathscr{M}$\\
	\hspace*{5mm}Obtain the \textit{best} model $\mathcal{M}_t=\argmin_{i\in \mathscr{M}}L_{t,i}$  \\
	\hspace*{5mm}$\beta_{t-1}= \argmax\{\beta\in\mathcal{G}: \mathcal{M}_t\in \mathcal{C}_{t-1}(1-\beta)\}$ \\
	\hspace*{5mm}\textbf{for} $\alpha\in \mathcal{G}$ \textbf{do}\\
	\hspace*{10mm}Update $\mathcal{C}_{t}(1-\alpha)=\text{MCS}(L_{:t}, \alpha, B)$\\
	\hspace{5mm}\textbf{end for}\\
	\hspace*{5mm}Update $\lambda_t= \lambda_{t-1}+ \gamma[\mathbf{1}(\alpha_{t-1}>\beta_{t-1})-\bar{\alpha}] $\\
	\hspace*{5mm}$\alpha_t^*=\argmin_{\alpha\in\mathcal{G}} \frac{1}{\tau}\sum_{s=t-1}^{t-\tau}\{|\mathcal{C}_t(1-\alpha)| +\lambda_t \max [\mathbf{1}(\alpha>\beta_s)-\bar{\alpha}, 0] \}$\\  
	\hspace*{5mm}$\alpha_t=\alpha_t^*\mathbf{1}(\lambda_t<\lambda_{\max})$\\
	\hspace*{5mm}\textbf{Output:} Calibrated model prediction set $\mathcal{C}_t(1-\alpha_t)$
\end{algorithm}

\subsection{Nonasymptotic Coverage Guarantee}\label{subsec:thm}

The following theorem, which requires no distributional assumption on $F_t$, immediately leads to the desired coverage guarantee in \eqref{eq:target} for MPS.

\begin{theorem}\label{thm}
	If $\gamma=c\lambda_{\max}$ for some constant $c\in(0,1)$, then for any nonnegative integer $n$, $\lvert T^{-1}\sum_{t=n+1}^{n+T} \mathbf{1}(\alpha_t>\beta_t)-\bar{\alpha}\rvert\leq (c+1)/(cT)$.
\end{theorem}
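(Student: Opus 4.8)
The plan is to exploit the simple linear recursion for $\lambda_t$ and convert the target quantity into a telescoping sum, reducing the whole problem to a two-sided bound on the trajectory of $\lambda_t$. Writing $e_t:=\mathbf{1}(\alpha_t>\beta_t)$, the update rule $\lambda_{t+1}=\lambda_t+\gamma(e_t-\bar{\alpha})$ telescopes over $t=n+1,\dots,n+T$ to give
\[
\sum_{t=n+1}^{n+T}(e_t-\bar{\alpha})=\frac{\lambda_{n+T+1}-\lambda_{n+1}}{\gamma},
\]
so that after dividing by $T$,
\[
\left|\frac{1}{T}\sum_{t=n+1}^{n+T}e_t-\bar{\alpha}\right|=\frac{\lvert\lambda_{n+T+1}-\lambda_{n+1}\rvert}{\gamma T}.
\]
It therefore suffices to show that the sequence $\{\lambda_t\}$ stays within an interval of width at most $(c+1)\lambda_{\max}$: since $\gamma=c\lambda_{\max}$, this yields $\lvert\lambda_{n+T+1}-\lambda_{n+1}\rvert/(\gamma T)\le (c+1)\lambda_{\max}/(c\lambda_{\max}T)=(c+1)/(cT)$, which is exactly the claimed bound.

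The core of the argument is thus to prove, by induction on $t$, that $\lambda_t\in[-\gamma\bar{\alpha},\,\lambda_{\max}+\gamma(1-\bar{\alpha})]$ for all $t\ge n$, an interval whose width is precisely $\lambda_{\max}+\gamma=(1+c)\lambda_{\max}$. Each step moves $\lambda$ upward by at most $\gamma(1-\bar{\alpha})$ or downward by at most $\gamma\bar{\alpha}$, so the real work is to establish the two self-correcting boundary behaviors. For the upper bound I would use the clipping rule $\alpha_t=\alpha_t^*\mathbf{1}(\lambda_t<\lambda_{\max})$: whenever $\lambda_t\ge\lambda_{\max}$ we have $\alpha_t=0$, hence $\mathbf{1}(\alpha_t>\beta_t)=\mathbf{1}(0>\beta_t)=0$ because $\beta_t\ge 0$; thus $e_t=0$ and $\lambda_{t+1}=\lambda_t-\gamma\bar{\alpha}<\lambda_t$, so $\lambda$ can never climb past $\lambda_{\max}$ by more than one upward increment. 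Symmetrically, for the lower bound, when $\lambda_t\le 0$ both the penalty term $\lambda_t\max[\mathbf{1}(\alpha>\beta_s)-\bar{\alpha},0]$ and the cardinality term $\lvert\mathcal{C}_t(1-\alpha)\rvert$ in the cost are minimized by taking $\alpha$ as large as possible, so $\alpha_t^*$ attains its maximal admissible value; this value exceeds $\beta_t$, forcing $e_t=1$ and hence $\lambda_{t+1}=\lambda_t+\gamma(1-\bar{\alpha})>\lambda_t$, so $\lambda$ cannot fall below $0$ by more than one downward increment. Feeding these two cases into the induction—whose base case holds since $\lambda_n=\lambda_{\max}/2$ lies strictly inside the interval—closes the claim, and combining with the telescoping identity and $\gamma=c\lambda_{\max}$ completes the proof.

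I expect the lower-bound self-correction to be the main obstacle. The upper bound is immediate from the explicit clipping of $\alpha_t$ at $\lambda_{\max}$, but the lower bound hinges on two monotone facts: that a nonpositive $\lambda_t$ pushes the minimizer $\alpha_t^*$ to its largest value (using monotonicity of $\lvert\mathcal{C}_t(1-\alpha)\rvert$ in $\alpha$), and that this largest value strictly exceeds $\beta_t$, i.e., the most aggressive (smallest) model set necessarily fails to contain the next-period best model $\mathcal{M}_{t+1}$. Establishing the latter requires a boundary property of the MCS map ensuring $\beta_t$ stays strictly below the top of the range, which is where care with the tie-breaking in $\mathbf{1}(\alpha>\beta_t)$ and the behavior of $\mathcal{C}_t(1-\alpha)$ as $\alpha$ approaches its maximum is needed. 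Once these ``extremes force the indicator'' facts are in place, the remainder is routine.
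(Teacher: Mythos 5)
Your overall strategy---telescoping the recursion $\lambda_{t+1}=\lambda_t+\gamma(e_t-\bar{\alpha})$ to get $\lvert T^{-1}\sum_t e_t-\bar{\alpha}\rvert=\lvert\lambda_{n+T+1}-\lambda_{n+1}\rvert/(\gamma T)$ and then confining $\lambda_t$ to an interval of width $\lambda_{\max}+\gamma=(1+c)\lambda_{\max}$---is exactly the argument behind the result the paper invokes: the paper's own proof is a one-line reduction to Theorem 1 of \citet{yang2024bellman}, verifying only monotonicity of $\mathcal{C}_t(1-\beta)$ in $\beta$ and the safeguard $\mathcal{C}_t(1)=\mathscr{M}$. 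Your telescoping identity, the width computation, and the upper-boundary step (if $\lambda_t\geq\lambda_{\max}$ then $\alpha_t=0$ by the clipping rule, and the safeguard gives $\beta_t\geq 0$, hence $e_t=0$ and $\lambda$ moves down) are correct and coincide with the cited proof.

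The genuine gap is the lower-boundary step, which you have correctly located but not closed. You need: $\lambda_t\leq 0$ forces $e_t=1$. Your observation that a nonpositive $\lambda_t$ drives the minimizer $\alpha_t^*$ to the top of its admissible range is fine (both terms of the objective are then non-increasing in $\alpha$), but the conclusion $\mathbf{1}(\alpha_t^*>\beta_t)=1$ requires $\beta_t$ to lie strictly below that maximal value, i.e., the most aggressive set must exclude $\mathcal{M}_{t+1}$. In \citet{yang2024bellman} this is an explicit standing assumption---the prediction set at miscoverage level $1$ is \emph{empty}---and it is the mirror image of the safeguard at the other endpoint. For MCS it fails: the procedure always returns at least one surviving model, so whenever that survivor is $\mathcal{M}_{t+1}$ one gets $\beta_t$ equal to the maximum of the range, $e_t=0$, and $\lambda_t$ can drift downward without bound; the two-sided bound then breaks, and only the one-sided inequality $T^{-1}\sum_{t}e_t-\bar{\alpha}\leq(c+1)/(cT)$---which is all that \eqref{eq:target} requires---survives from the portion of your argument that is complete. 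So this is not a routine boundary technicality to be ``checked with care'' later: it is the single substantive condition separating the coverage guarantee from the stated two-sided bound, and it is also precisely the condition on which the paper's own verification is silent.
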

\begin{proof}
	The proof of this theorem directly follows from that of Theorem 1 in \cite{yang2024bellman}, once the following  assumptions are verified: for any $t\geq 1$, the prediction set $\mathcal{C}_t(1-\beta)\subset \mathscr{M}$ satisfies (i) \textit{monotonicity}: $\mathcal{C}_t(1-\beta_1) \subset \mathcal{C}_t(1-\beta_2)$ if $\beta_1>\beta_2$, and (ii) \textit{safeguard}: $\mathcal{C}_t(1)=\mathscr{M}$, i.e. $\mathbb{P}(\mathcal{M}_{t+1}\in \mathcal{C}_t(1))=1$. By \cite{hansen2011model}, it is clear that $\mathcal{C}_t(1-\beta)$ constructed from MCS for any $\beta\in[0,1]$ and $t\geq1$ satisfies both conditions.
\end{proof}

\begin{remark}
	MPS builds upon MCS, but Theorem \ref{thm} does not rely on the offline asymptotic coverage property of  the MCS procedure established  by \cite{hansen2011model}, as it is an online coverage guarantee. Thus, assumptions made in \cite{hansen2011model} to ensure the offline asymptotic coverage of MCS with regard to the population concept $\mathcal{M}^*$ (i.e., the ``true'' set of best models) are irrelevant and not needed. However, if such assumptions hold, a nice by-product is that,  if $\mathcal{M}_{t+1}\in\mathcal{M}^*$, then MPS also has the asymptotic coverage property:  $\liminf_{t\rightarrow\infty} \mathbb{P}(\mathcal{M}_{t+1}\notin \mathcal{C}_t(1-\alpha_t))\leq \alpha_t$.
\end{remark}

\section{Numerical Evaluation}\label{sec:numerical}

Our numerical studies demonstrate that MPS maintains a well-controlled miscoverage rate despite   changes in the data-generating process and model performance over time.  As shown, during periods of heightened uncertainty caused by nonstationarity and model ambiguity, MPS prioritizes maintaining the target coverage  by adaptively increasing cardinality, thereby acknowledging model uncertainty and data limitations. By contrast, the offline MCS  lacks adaptability to evolving dynamics and, more seriously, often produces trivial sets that include all candidate models. 

Interestingly, as MPS adaptvely balances efficiency and accurate coverage, during periods when achieving good coverage is relatively easy, it tends to produce highly precise prediction sets with extremely small cardinalities. We refer to these updates as \textit{quality sets}, which occur frequently throughout the MPS update process. These sets effectively identify models that best explain the data at any given time and offer valuable insights into gradual or abrupt changes in the data-generating mechanism.




\subsection{Simulation Experiments}\label{subsec:sim}

\paragraph{Experiment with designed loss matrices}
The MPS procedure can be applied to any model class, data-generating process, or model evaluation metric. Its essential input is the loss $L_{t,i}$, which measures the relative performance of model $i$ in explaining the data at time $t$, based on a specific evaluation metric $\mathcal{L}$. In view of this generality, we first conduct an experiment based on randomly generated loss matrices, which allows us to  control the comparative performance of different candidates $i \in \mathscr{M}$ over time.

We generate the loss matrix $L=(L_{t,i})_{t\in[T], i\in[m]}$ under three different designs, where $T=2000$ is the total time length, and $m=10$ represents the number of candidates in $\mathscr{M}$:
\begin{itemize}
	\item[(a)] \textit{All candidates  perform similarly over the entire period:} All entries in the loss matrix are generated from the uniform distribution $U(0,2)$. 
	\item[(b)] \textit{Two candidates exhibit recurring local changes, while the others maintain similar  performance over the entire period:} Two columns of $L$ contain continuous blocks of 25 smaller entries, generated from $U(0.5,1.5)$ within every 50 entries, while  the remaining entries in these columns are drawn from $U(1,2)$. All entries of the remaining eight columns are generated from $U(0,2)$. 
	\item[(c)] \textit{Two candidates exhibit gradual changes with a common turning point, while the others maintain similar performance  over the entire period:} All entries in one column are generated as $U(\mu_t,1)$, where $\mu_t=\frac{2t}{T}\mathbf{1}(0\leq t\leq T/2)+\frac{2(T-t)}{T}\mathbf{1}(T/2< t\leq T)$ increases from 0.5 to 1.5 at time $t=1000$, and then decreases back to 0.5. All entries in another column are generated as $U(\mu_t^\prime,1)$, where $\mu_t^\prime = \frac{T-2t}{T}\mathbf{1}(0\leq t\leq T/2)+\frac{2t-T}{T}\mathbf{1}(T/2< t\leq T)$ decreases from 1.5 to 0.5 at time $t=1000$, and then increases back to 1.5. All entries of the remaining eight columns are drawn from $U(0,2)$.
\end{itemize}

\begin{figure}[t]
	\centering
	\includegraphics[width = 0.8\textwidth]{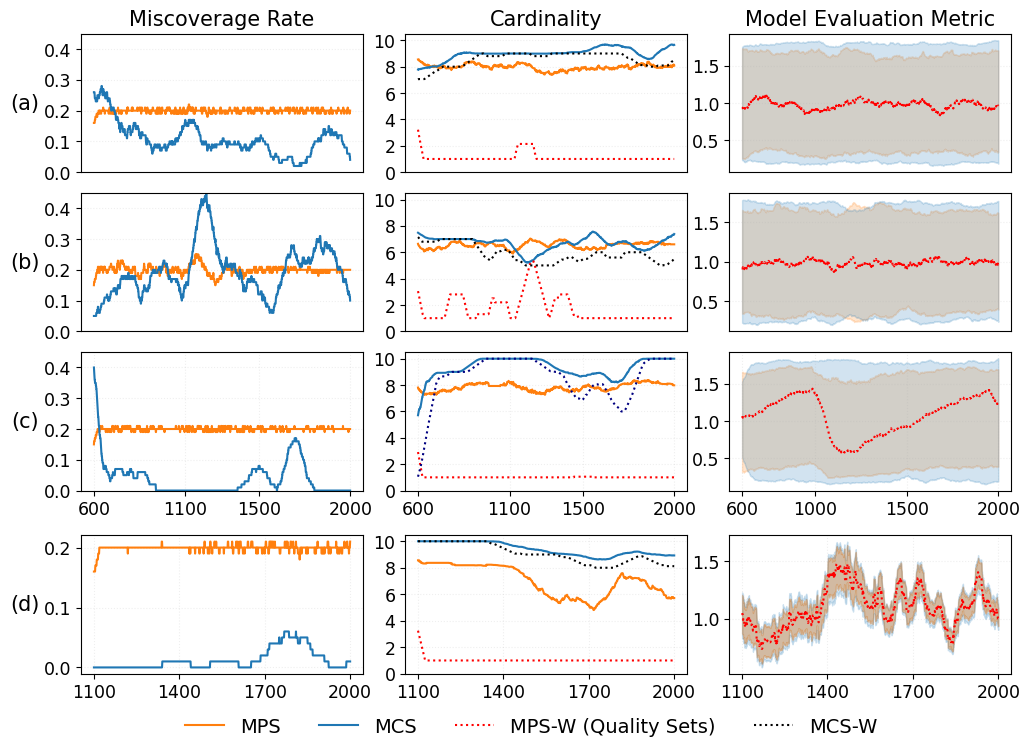}
	\caption{Miscoverage rate, cardinality, and the range of losses (i.e., the values of the model evaluation metric) for the models selected by MPS and MCS for simulation experiments. Results are shown for three loss matrix designs (a)--(c), as well as for the model fitting experiment in  panel (d).}
	\label{fig:sim}
\end{figure}

We run Algorithm \ref{alg:MPS} with  $\tau=100$, $\bar{\alpha}=0.2$, and $n=500$, so the remaining $T-n=1500$ time points are used to demonstrate the MPS updates. We benchmark MPS against the offline MCS procedure and display the results under the three designs in panels (a)--(c) of Figure \ref{fig:sim}. 

\paragraph{Results}
The left panel of Figure \ref{fig:sim} shows the miscoverage rate for the models selected by MPS and MCS, computed using a moving window of size 100. It can be seen that MPS consistently maintains the miscoverage rate close to the nominal level of 0.2 across  designs (a)--(c). By contrast, MCS fails to control the  miscoverage rate under all three designs. In particular, for design (c), MCS often leads to zero miscoverage due to its inclusion of all candidates, which is  uninformative and lacks adaptability to evolving comparative model performance.

The middle panel of Figure \ref{fig:sim} shows the cardinality of the model sets selected by MPS and MCS. The solid curves represent the average cardinality over a moving window of size 100, consistent with the calculation of miscoverage rates. The moving average cardinality confirms the tendency of MCS to trivially select all candidates under design (c), whereas MPS avoids this issue. However, it is worth noting that averaging the cardinality over a moving window does not reveal the full picture, as the cardinality can vary at each time step. Thus, we additionally report the minimum cardinality over a moving window of size 20 for both MPS and MCS, labeled as MPS-W (Quality Sets) and MCS-W, respectively, where W stands for ``windowed'' as it essentially corresponds to a windowed procedure: at each $t$, the minimal cardinality set from the last 20 steps is adopted.
Based on this measure, we observe that MPS frequently produces sets with extremely small cardinality---henceforth referred to as \textit{quality sets}. We also observe notable spikes in the cardinality of the MPS quality sets under design (b), as smaller losses occur for more candidates during those periods. This highlights MPS's sensitivity to local patterns in the update process. In contrast, MCS-W closely resembles MCS, indicating that MCS never produces  low-cardinality sets throughout the entire period. This reveals that MCS lacks the efficiency demonstrated by MPS in the online setting.

The right panel of Figure \ref{fig:sim} visualizes the range of losses for the candidates selected by MPS and MCS, where the shaded areas span the maximum and minimum losses among the selected candidates. Note that they correspond to values of the model evaluation metric in real applications and  are thus of practical interest. Additionally, we display the average loss of the MPS quality sets. All reported values are averaged over a moving window of size 100 to smooth out the patterns. 
We observe that the loss range of MPS tends to fall within that of MCS, indicating greater overall stability. Moreover, under design (c), the quality sets display significant fluctuations in loss values. This reflects MPS's sensitivity to the turning point at $t = 1000$: once enough data accumulate, the quality updates are able to select the best-performing model after the shift. In contrast, MCS shows almost no response.

\paragraph{Experiment with model fitting}
An experiment involving actual model fitting is further conducted: We generate a time series $\{Y_t\}_{t=1}^T$ with  $T=2000$ from  $Y_t=0.3 Y_{t-1}+\varepsilon_t+0.3\mathbf{1}(1\leq t\leq 1000)\varepsilon_{t-1}$, where $\varepsilon_t\overset{i.i.d}{\sim}N(0,1)$, and run Algorithm \ref{alg:MPS} with $n=1000$ and $\tau=500$. Here $\mathscr{M}$ consists of AR$(p)$ and MA$(q)$ models with $1\leq p,q\leq 5$, and $\mathcal{L}$ is the squared one-step-ahead forecast error (FE). 

\paragraph{Results}
Panel (d) of Figure \ref{fig:sim}  presents results obtained using the same procedure as in the previous experiment. Similar to the findings from (c),  MPS maintains accurate control of the miscoverage rate and  yield quality sets with much smaller cardinality (mostly one) than MCS, exhibiting exceptional stability after an initial adaptation period. While fluctuations in the FE are present, possibly due to all candidate models being misspecified, MPS exhibits a narrower loss range compared to MCS.

\subsection{Empirical Analysis}\label{sec:data}

\paragraph{Data and settings} We consider two real-world time series: the daily average oil temperature (OT) with total time length $T=726$, computed from hourly data in the Electricity Transformer Temperature (ETT) dataset \citep{Zhou2021}, and the daily CBOE Volatility Index (VIX) from 2020-03-25 to 2025-03-25 which, after differencing, yields $T = 1304$. For simplicity, we focus on univariate forecasting for each data, and adopt the squared one-step-ahead FE as the evaluation metric $\mathcal{L}$. For both data, $\mathscr{M}$ includes 10 candidate models, and we set $n=240$, $\tau=150$, and $\bar{\alpha}=0.2$. For OT, $\mathscr{M}$ contains an AR(1) model, and AR(1) models coupled with nine different combinations of polynomial (linear, quadratic, or cubic) and seasonal (one, two, or three harmonics) trends, using a seasonal period of seven days. For VIX,  $\mathscr{M}$ contains AR($p$) models with $1\leq p\leq 4$, threshold AR \citep{Tong2012}, smooth transition AR \citep{Teraesvirta1994}, and machine learning methods \citep{James2021}:  random forest, vanilla neural network, Long Short-Term Memory (LSTM), and the Transformer \citep{Vaswani2017}. See Appendix \ref{sec:a1} for more details on the data and models.

\paragraph{Results} 
Figure \ref{fig:ETT1} presents results using the method from Section \ref{subsec:sim}. For both data,   MPS maintains the miscoverage rate close to the nominal level of 0.2, whereas MCS  trivially selects all 10 candidate models throughout the latter part of the OT data and  the entire update period of the VIX data. Notably, MPS consistently produces quality sets with near-unity cardinality for both datasets, while MCS always yield large or full sets; in fact, for VIX, the results of MCS-W are identical to those of MCS. Combining the miscoverage rate and cardinality results, we confirm the finding  from previous experiments: MPS offers greater robustness, adaptivity, and efficiency in online settings. Additionally, we  observe that the range of forecast errors from MPS tends to fall within that of MCS, indicating greater overall stability. Moreover, a closer look at the quality sets reveals interesting transitions in the favored models: For OT, MPS dynamically adapts its selection from an AR(1) model with a quadratic trend to one with a cubic trend (both with one harmonic), before eventually converging to the pure AR(1) model. For VIX, MPS initially favors LSTM, gradually shifts to AR(1), and then quickly converges to the Transformer for most of the period; see the right panel in Figure \ref{fig:ETT1}.

\begin{figure}[t]
	\centering
	\includegraphics[width=0.9\textwidth]{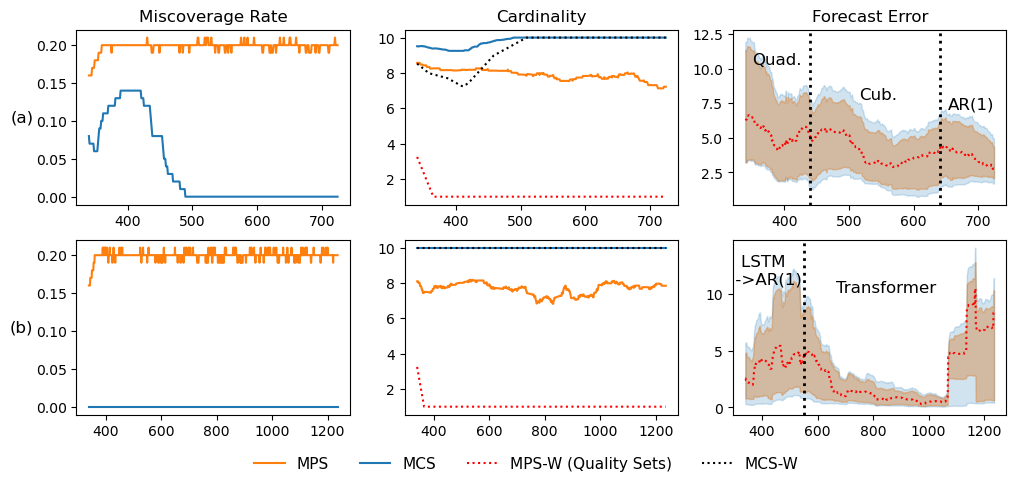}
	\caption{Comparison of MPS and MCS performance on real-world data: (a) OT and (b) VIX.}
	\label{fig:ETT1}
\end{figure}

\section{Conclusion and Discussion}\label{sec:conclude}

We introduced the Model Prediction Set (MPS), a novel framework to tackle the challenge of adaptive model selection in online nonstationary settings. MPS is the first to address adaptive model (set) selection with long-run coverage guarantees in an online nonstationary environment, and the first to do so with minimal distributional assumptions and in highly flexible problem settings. Numerical studies demonstrated its practical advantages over offline methods in terms of robustness, efficiency, and adaptivity. In particular,  MPS was found to frequently produce quality sets that sensitively adapt to changing dynamics and model performance over time. Since it operates on loss values, MPS imposes no constraints on the data-generating mechanism, data structure, model class, training method, and evaluation metric. Its utility extends beyond forecasting; e.g., it may be applied to select random objects or policies in nonstationary environments \citep{Sutton2018}.

There are also limitations that need to be addressed in future research. Our numerical studies  considered at most $m=10$ candidate models due to the dramatic increase in computation time as $m$ grows. The computational bottleneck lies in the bootstrap procedure used by MCS. Each update step in our numerical experiments took an average of 25.6 seconds, running on an AMD Epyc server with 128 CPU cores and 492 GB of RAM. However, it is  noteworthy that MCS is used only to produce a preliminary model set. The MPS framework remains valid when combined with other model set construction methods, as its nonasymptotic coverage guarantee is ensured by the calibration procedure rather than by MCS. Therefore, advances in computational efficiency for MCS or alternative methods for model set construction would further enhance the scalability of MPS.

\bibliography{MPS}

\newpage
\appendix

\section{Additional Details for the Empirical Analysis}\label{sec:a1}

This appendix provides additional details about the empirical analysis in Section \ref{sec:data} and about the experiment illustrated in Figure \ref{fig:ModelSelction}.

\subsection{Data}
\paragraph{ETT data} The Electricity Transformer Temperature (ETT) dataset is a well-established benchmark dataset in time series forecasting \citep{Zhou2021}. Our analysis focuses specifically on the oil temperature (OT) from the ETTh1 (ETT-hourly-1) subset, which comprises hourly measurements of critical operational parameters from a 220kV power transformer in China, recorded over a two-year period (July 2016--July 2018). The dataset includes seven key variables: OT as the target measurement and six complementary power load features. 
We use only the hourly OT data (without other features) and compute the daily average to obtain the daily OT series. No further transformation is applied.

\paragraph{CBOE Volatility Index}
The CBOE Volatility Index (VIX), known as the market's ``fear gauge,'' measures 30-day expected stock market volatility derived from S\&P 500 index options \citep{FRBLn.d.}. Maintained by the Chicago Board Options Exchange (CBOE), the VIX reflects investor sentiment and risk expectations in real time. As a forward-looking indicator, it serves as a benchmark for volatility trading and risk management.We use daily VIX data from 2020-03-25 to 2025-03-25. Since financial data typically exhibit stochastic rather than deterministic trends (e.g., random walk behavior), we difference the series prior to training.

Figure \ref{fig:tsplot} shows the time series plots of the daily average OT and daily VIX data (before differencing). 

\begin{figure}[h]
	\centering
	\includegraphics[width=\textwidth]{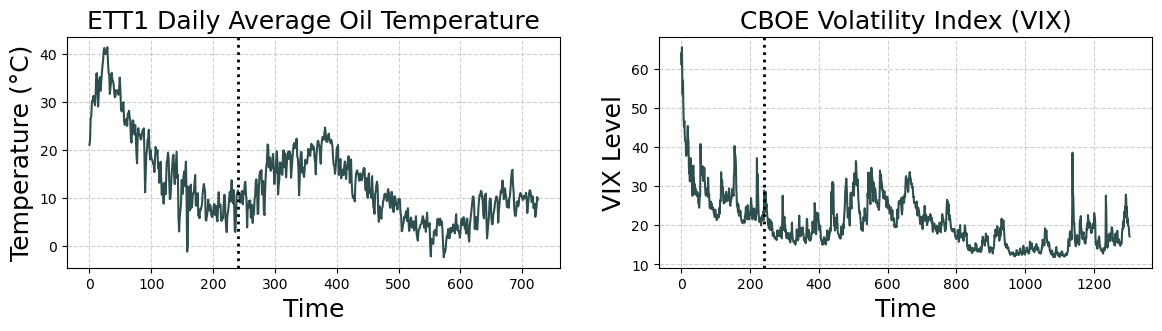}
	\caption{Time series plots of (a) OT and (b) VIX. The dashed lines indicate the end of the initial training set at $n=240$.}
	\label{fig:tsplot}
\end{figure}

\subsection{Models and Training Details}

We consider 10 candidate models for each data:
\begin{itemize}
	\item For OT, these include  an AR(1) model, and AR(1) models coupled with nine different combinations of polynomial (linear, quadratic, or cubic) and seasonal (one, two, or three harmonics) trends, using a seasonal period of seven days.
	\item For VIX, these include AR($p$) models with $1\leq p\leq 4$, threshold AR (TAR), smooth transition AR (STAR), random forest, vanilla neural network (VNN), Long Short-Term Memory (LSTM), and the Transformer. 
\end{itemize}

The statistical models (AR, AR with polynomial and seasonal trends, TAR, and STAR) are implemented in R. The AR($p$) model is given by $Y_t = \phi_0+ \sum_{i=1}^{p}\phi_i Y_{t-i} + \varepsilon_t$, for $p\geq1$. An AR(1) model with  polynomial (linear, quadratic, or cubic) and seasonal (one, two, or three harmonics) trends is $Y_t = \phi_0+\phi_1Y_{t-1} + m_t+S_t+ \varepsilon_t$
where $m_t=\sum_{i=1}^q\gamma_i t^i$,
$S_t = \sum_{j=1}^r [\alpha_j\sin(2\pi t/s)+\beta_j\cos(2\pi t/s)]$, and $s$ is the seasonal period, for $1\leq q, r\leq 3$. These models are estimated via conditional least squares using the \texttt{arima} function from the \texttt{stats} package. The two-regime TAR model 
\[
Y_t =
\begin{cases}
	\phi_{1,0}	+\phi_{1,1} Y_{t-1} + \varepsilon_t, & \text{if } Y_{t-2} \leq r \\
	\phi_{2,0}	+ \phi_{2,1} Y_{t-1} + \varepsilon_t, & \text{if } Y_{t-2} > r
\end{cases}
\]
is fitted by the minimizing AIC method using the \texttt{tar} function from the \texttt{TSA} package. The STAR model 
$Y_t = \sum_{j=1}^{K}(\phi_{j,0} +\phi_{j,1} Y_{t-1}) \cdot G_j(Y_{t-2}; \gamma_j, c_j) + \varepsilon_t$, where 
$G_j(Y_{t-2}; \gamma_j, c_j) = \frac{\exp(-\gamma_j (Y_{t-2} - c_j))}{\sum_{k=1}^{K} \exp(-\gamma_k (Y_{t-2} - c_k))}$ is fitted via nonlinear least squares estimation using the \texttt{star} function from the \texttt{tsDyn} package, which automatically selects $K$ from $\{1,\dots, K_{\max}\}$, and we set $K_{\max}=5$.

The machine learning methods are all implemented in Python using the past 10 lags as input features: random forest via \texttt{sklearn.ensemble}, and VNN, LSTM, and Transformer models via \texttt{tensorflow.keras}; see more details in Table \ref{tab:model_params}.

\begin{table}[h]
	\centering
	\caption{Specification for random forest, VNN, LSTM, and Transformer.}
	\label{tab:model_params}
	\begin{tabular}{lp{10cm}}
		\toprule
		\textbf{Model} & \textbf{Key Specifications}\\
		\midrule
		Random Forest & 10 lags as input features, 100 trees, MSE splitting \\
		VNN & 10 lags as input features, 2 hidden layers (10 units each, ReLU), linear output layer, Adam, batch size = 16, epochs = 10\\
		LSTM & 10 lags as sequence input, 100 units (tanh for the cell state and sigmoid for the gates), linear output layer,  Adam, batch size = 16, epochs = 10\\
		Transformer & 2-layer decoder-only Transformer, 8-head self-attention, positional encoding, feedforward dimension = 16, ReLU activation, final linear output layer, Adam, batch size = 16, epochs = 10\\
		\bottomrule
	\end{tabular}
	\label{tab:modelpara}
\end{table}

Based on the computed losses from model fitting, the MPS procedure is implemented via the R package \texttt{MCS}.

\subsection{Experimental Settings for Figure \ref{fig:ModelSelction}}

Figure \ref{fig:ModelSelction} is generated under the same experimental setting as the empirical analysis of OT in Section \ref{sec:data}, with the addition of results from single-model selection methods (AIC, BIC, and cross-validation). The cross-validation (CV) method refers to time-series CV, where at each time $t$,  we split the data up to time $t$ into a fitting portion (the first 90\%) and a hold-out  portion (the last 10\%). The former is used for model training, and the latter for evaluating forecast performance via a rolling one-step-ahead forecasting procedure.

\end{document}